\documentclass[11pt]{article}
\usepackage[utf8]{inputenc}
\usepackage{cmap}           
\usepackage[T1]{fontenc}
\usepackage{lmodern}
\usepackage[protrusion=true,expansion=false]{microtype}
\usepackage[margin=1in, top=1.1in, bottom=1.1in]{geometry}
\usepackage{amsmath,amssymb,amsthm}
\usepackage{booktabs}
\usepackage{graphicx}
\usepackage[colorlinks=true, linkcolor=blue!70!black, citecolor=blue!70!black, urlcolor=blue!70!black]{hyperref}
\usepackage{xcolor}
\usepackage{enumitem}
\usepackage[numbers,sort&compress]{natbib}
\usepackage{caption}
\usepackage{subcaption}
\usepackage{tikz}
\usetikzlibrary{shapes.geometric,arrows.meta,positioning,fit,backgrounds,calc}
\usepackage{listings}
\usepackage{float}
\usepackage{array}
\usepackage{multirow}

\newtheorem{theorem}{Theorem}
\newtheorem{proposition}{Proposition}
\newtheorem{definition}{Definition}

\title{\textbf{Operationalizing Reconstructive Authority}\\[4pt]
       \large Runtime Construction, Dependency Resolution, and Execution Gating\\
       in Autonomous Agent Systems}

\author{Marcelo Fernandez\\[2pt]
        \small TraslaIA\\[2pt]
        \small \href{https://agentcontrolprotocol.xyz}{agentcontrolprotocol.xyz}}

\date{}

\begin{document}

\maketitle


\begin{table}[h!]
\centering
\small
\caption*{\textbf{Agent Governance Series}}
\begin{tabular}{@{}llll@{}}
\toprule
\textbf{Paper} & \textbf{Title} & \textbf{Zenodo DOI} & \textbf{arXiv} \\
\midrule
P0 & Atomic Decision Boundaries~\cite{fernandez2026a}           & 10.5281/zenodo.19670649 & 2604.17511 \\
P1 & Agent Control Protocol (ACP)~\cite{fernandez2026b}         & 10.5281/zenodo.19672575 & 2603.18829 \\
P2 & From Admission to Invariants (IML)~\cite{fernandez2026c}   & 10.5281/zenodo.19672589 & 2604.17517 \\
P3/4 & Irreducible Governance Structure~\cite{fernandez2026govstr} & 10.5281/zenodo.19708496 & TBD \\
P5 & Reconstructive Authority Model (RAM)~\cite{fernandez2026ram} & 10.5281/zenodo.19669430 & TBD \\
P6 & \textbf{Operationalizing Reconstructive Authority}          & 10.5281/zenodo.19699460 & TBD \\
\bottomrule
\end{tabular}
\end{table}


\begin{abstract}
Autonomous agent systems fail not only due to incorrect decisions, but due to executing decisions whose authority no longer holds at runtime. Prior work defined Reconstructive Authority (RAM) as a condition for valid execution: actions are permitted only if authority can be constructed from current state.

This paper addresses enforcement at runtime: how to enforce this condition in a running system.

We introduce a runtime execution model in which authority is evaluated at action time and execution is conditioned on its constructibility. This extends the execution state space beyond admit/deny with a third state, \textit{halt}, representing cases where authority is undefined due to incomplete or uncertain observability.

We define a concrete execution protocol including dynamic dependency resolution, authority reconstruction, and explicit decision semantics. We further introduce a Recovery Loop that integrates drift detection (IML) with execution control (ACP), allowing the system to suspend execution, acquire missing information, and re-attempt authority reconstruction.

We show that this model guarantees safety—no action is executed without constructible authority—and conditional liveness: execution resumes when authority-defining variables become observable.

This work operationalizes reconstructive authority as a runtime enforcement mechanism, providing the execution semantics required to apply RAM in real systems.
\end{abstract}

\tableofcontents
\newpage

\section{Introduction}

Autonomous systems operate in environments where the conditions under which decisions are
made do not remain stable until execution. This gap between decision and execution is not
incidental; it is structural. Prior work has shown that separating evaluation from execution
introduces inconsistencies~\cite{fernandez2026a,lamport1978time}, that enforcement alone cannot detect
behavioral drift~\cite{fernandez2026c}, and that maintaining coherent behavior under limited
observability requires multiple interacting layers~\cite{fernandez2026govstr}.

A central failure mode follows from this structure: systems execute decisions whose
authority no longer holds at runtime. This is not a failure of decision quality, but of
execution validity.

\textit{Example.} An agent is authorized to execute a financial transfer at time $t_0$,
when the recipient account status, transaction limits, and contextual risk indicators are
all verified. By time $t_1$, the recipient account has been flagged and the risk profile
has changed. The admission-time authority computed at $t_0$ no longer holds, but absent
runtime reconstruction the system executes regardless---this is execution under stale
authority, a structural failure that validation-only approaches cannot prevent.

Reconstructive Authority (RAM), introduced in Paper~5~\cite{fernandez2026ram}, addresses
this by redefining execution as a constructibility problem: authority must be derived from
the current system state, not inherited from prior validation. However, while RAM
establishes the theoretical condition for valid execution, it does not define how such a
condition is enforced in a running system.

\textit{This paper provides that operationalization. This paper does not extend the theoretical definition of Reconstructive Authority. Instead, it addresses a distinct problem: how to enforce authority reconstruction as a runtime execution protocol.}

We introduce a runtime execution model in which authority is evaluated at the point of
action, and execution is conditioned on its constructibility. This leads to a fundamental
extension of the execution state space: beyond traditional admit/deny outcomes, we define
a third state, \textsc{halt}, representing situations where authority cannot be established
due to incomplete or uncertain information.

The introduction of \textsc{halt} changes the nature of execution control. Systems are no
longer forced to approximate decisions under uncertainty; instead, they can explicitly
refuse to act when authority is undefined.

We further show that authority is not a continuous function of system state. Small
perturbations can invalidate execution, and sequences of decisions accumulate divergence
over time. Without runtime reconstruction, these properties guarantee eventual execution
under invalid authority.

To address this, we integrate RAM within the Agent Control Protocol
(ACP)~\cite{fernandez2026b} as the authorization criterion at the execution gate (stage~3
of the 6-stage ACP flow), and connect it with the Invariant Monitoring Layer
(IML)~\cite{fernandez2026c} for drift detection. This enables a closed-loop execution
model. ACP remains the enforcement pipeline; RAM refines its authorization semantics.

The key contribution of this paper is the introduction of a Recovery Loop that governs
system behavior after \textsc{halt}. Rather than treating \textsc{halt} as a terminal
state, the system identifies missing authority-defining variables, acquires additional
information, and attempts reconstruction. We prove that, under recoverable observability
conditions, the system will eventually resume execution.

Taken together, these elements define a complete execution model with two guarantees:

\begin{itemize}
  \item \textbf{Safety:} no action is executed without constructible authority.
  \item \textbf{Liveness:} execution resumes when authority becomes constructible.
\end{itemize}

This reframes execution in autonomous systems from a problem of estimation and validation
to one of constructibility and enforcement at runtime.

\section{System Model}

Let:
\begin{itemize}
  \item $S_r(t)$: real system state at time $t$
  \item $O(t) \subseteq S_r(t)$: observable state
  \item $A(t)$: execution authority
  \item $F$: authority construction function
  \item $U: V \to [0,1]$: uncertainty map over state variables; $U(v)$ quantifies the
        degree of uncertainty for variable $v \in V$. A required variable triggers
        $A(t) = \bot$ when $U(v) > \theta_{\mathrm{auth}}$, where
        $\theta_{\mathrm{auth}}$ is the authority-resolution threshold.
\end{itemize}

Authority is defined as:
\[
  A(t) = F(S_r(t))
\]

Since $S_r(t)$ is not fully observable, the system operates on $O(t)$ and must determine
whether authority can be constructed.

\begin{definition}[Authority Constructibility]
\label{def:constructible}
Let $a$ be an action and $D(a)$ the set of variables required to evaluate authority
for $a$. Authority is \emph{constructible} in state $O(t)$ if and only if all three
conditions hold:
\begin{enumerate}
  \item \textbf{Observability:} all dependencies $d \in D(a)$ are observable in $O(t)$,
        i.e.\ $U(d) \leq \theta_{\mathrm{auth}}$.
  \item \textbf{Closure:} the dependency graph induced by $D(a)$ is closed under
        resolution (no unresolved upstream dependencies remain).
  \item \textbf{Consistency:} no invariant violations are detected over the evaluated
        dependencies.
\end{enumerate}
Formally:
\[
  \mathrm{Constructible}(a,\,O(t)) \;\triangleq\;
  \mathrm{Observable}(D(a),\,O(t)) \;\land\;
  \mathrm{Closed}(D(a)) \;\land\;
  \mathrm{Consistent}(D(a))
\]
If this condition holds, authority can be evaluated as either valid or invalid.
Otherwise, authority is \emph{non-evaluable}.
\end{definition}

\begin{definition}[Undefined Authority]
If $\mathrm{Constructible}(a,\,O(t))$ does not hold (i.e.\ any required variable is
unobservable, the dependency graph is open, or invariants are violated), then:
\[
  A(t) = \bot
\]
and execution must not proceed.
\end{definition}

\begin{definition}[Execution State]
\label{def:execstate}
Given an action $a$ and system state $O(t)$, the execution outcome is:
\[
  \mathrm{ExecState}(a,\,O(t)) =
  \begin{cases}
    \textsc{execute} & \text{if } \mathrm{Constructible}(a,\,O(t)) \land A(t) = \mathrm{True} \\
    \textsc{deny}    & \text{if } \mathrm{Constructible}(a,\,O(t)) \land A(t) = \mathrm{False} \\
    \textsc{halt}    & \text{if } \lnot\,\mathrm{Constructible}(a,\,O(t))
  \end{cases}
\]
The execution gate therefore admits exactly three outcomes:
$\mathrm{gate}(t) \in \{\textsc{execute},\; \textsc{deny},\; \textsc{halt}\}$.
\end{definition}

\textbf{The \textsc{halt} state is the central innovation of this paper.} It is distinct
from \textsc{deny}, and this distinction is not merely terminological:
\begin{itemize}
  \item \textsc{deny} corresponds to \emph{falsified authority}: authority is constructible
        and evaluates to False ($A(t) = \mathrm{False}$).
  \item \textsc{halt} corresponds to \emph{non-evaluable authority}: authority cannot be
        determined because one or more constructibility conditions fail ($A(t) = \bot$).
\end{itemize}
\textbf{Halt occurs when authority cannot be evaluated due to incomplete dependency
resolution, not when it is invalid.} Systems that collapse this distinction treat
epistemic incompleteness as a denial decision, which is incorrect: a non-evaluable
authority is neither valid nor invalid, and executing under it---in either direction---
violates the safety condition.

\section{Architectural Overview}

Operational RAM consists of four tightly integrated components.

\subsection{State Acquisition Layer}

Collects $O(t)$ from internal system signals, external inputs, and contextual environment.
No completeness assumption is made. Any gap in observability propagates directly to the
authority construction step.

\subsection{Authority Construction Engine}

Evaluates:
\[
  A(t) = F(O(t))
\]
via dynamic dependency resolution. The engine identifies which variables are required,
evaluates their observability and reliability, and returns one of the three authority
states.

\subsection{Execution Gate}

Enforces:
\[
  \begin{cases}
    A(t) = \text{True} & \Rightarrow \textsc{execute} \\
    A(t) = \text{False} & \Rightarrow \textsc{deny} \\
    A(t) = \bot        & \Rightarrow \textsc{halt}
  \end{cases}
\]

No execution occurs without a defined, positive authority state.

\subsection{Audit Layer}

Records for each execution attempt: the observed state $O(t)$, the dependency set $D(t)$,
the authority decision, and failure conditions. This ensures full traceability without
influencing the execution decision.

\section{Runtime Authority Construction Protocol}

We now define the operational protocol that implements RAM in real systems.

\subsection{Inputs}

At time $t$, the protocol receives:
\begin{itemize}
  \item Observable state $O(t)$
  \item Action class $C$
  \item Policy prior $P_0(C)$ (candidate generator only)
  \item Construction function $F$
  \item Uncertainty map $U$
\end{itemize}

\subsection{Prior Influence Constraints}

The policy prior $P_0(C)$ is strictly constrained:

\begin{itemize}
  \item It \textbf{cannot} authorize execution independently.
  \item It \textbf{cannot} override runtime construction.
  \item It \textbf{can only} propose initial candidate variables for $F$ to evaluate.
  \item Runtime construction may promote variables outside the prior when required.
  \item Demotion of required variables is not permitted within the same decision cycle
        under uncertainty.
\end{itemize}

\begin{proposition}[Prior Non-Authority]
No execution decision can be derived from $P_0(C)$ alone. Authority requires runtime
construction over $O(t)$.
\end{proposition}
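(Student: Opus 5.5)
The argument will go through the structure of the execution semantics: I will trace every path by which an outcome in $\{\textsc{execute},\textsc{deny},\textsc{halt}\}$ can be produced, and check that each one depends on $O(t)$ through $F$ and not on $P_0(C)$ alone. By Definition~\ref{def:execstate}, $\mathrm{ExecState}(a,O(t))$ is determined by two quantities: the predicate $\mathrm{Constructible}(a,O(t))$ and the value of $A(t)$. By the Execution Gate specification, $A(t)=F(O(t))$. So it suffices to show two things. First, neither of these quantities is a function of $P_0(C)$ alone. Second, the Prior Influence Constraints leave $P_0(C)$ only the role of supplying an initial candidate set $V_0 \subseteq V$.

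The first step is to formalize the constraint list. I model $P_0(C)$ as a map $C \mapsto V_0$. The runtime dependency set is then $D(a) = \mathrm{Resolve}(V_0, O(t))$, where resolution may add variables (promotion) but may not remove required variables within a cycle (no demotion). The second step looks at constructibility itself. Each conjunct of Definition~\ref{def:constructible} is a predicate that takes $O(t)$ as an explicit argument:
\begin{itemize}
\item Observability checks $U(d)\le\theta_{\mathrm{auth}}$ over the observed state.
\item Closure is evaluated on the graph resolved against $O(t)$.
\item Consistency checks invariants on the evaluated dependencies.
\end{itemize}
Hence, for a fixed $V_0$, the value of $\mathrm{Constructible}$ still varies with $O(t)$. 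The third step exhibits this variation concretely. Fix $C$ and $P_0(C)$, and take two observed states $O_1,O_2$ that agree on everything except $U(d)$ for some $d\in V_0$. Choose them so that $U(d)\le\theta_{\mathrm{auth}}$ in $O_1$ and $U(d)>\theta_{\mathrm{auth}}$ in $O_2$. Then $O_2$ yields \textsc{halt}, while $O_1$ yields \textsc{execute} or \textsc{deny} depending on $F$. The same prior therefore produces different gate outcomes, so no decision function $g$ with $\mathrm{gate}=g(P_0(C))$ can exist. In particular, this rules out execution authorized by the prior independently, which is the first constraint.

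The main obstacle is that the statement is informal: ``derived from $P_0(C)$ alone'' must be given a precise meaning. The counterexample above also needs two nondegeneracy assumptions. One is that $V_0$ is nonempty, or, if it is empty, that promotion adds some variable. The other is that $U$ can actually cross $\theta_{\mathrm{auth}}$ for some variable. I would state both explicitly as hypotheses.

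If $V_0=\emptyset$ and no variable is promoted, I would argue that $\mathrm{Closed}$ fails, since nothing has been resolved, and the outcome is \textsc{halt}. That outcome is a refusal to decide, not an execution decision, so the claim still holds. Finally, the no-override constraint guarantees that even when $P_0(C)$ ``suggests'' admission, the gate reads only $F(O(t))$. This closes the case in which the prior might otherwise short-circuit the construction.
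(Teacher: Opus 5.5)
Your argument is sound at its core, but the part that actually proves the statement is your closing sentence, not the two-state construction you place at the center. The paper gives no separate proof. The proposition is read off from the Prior Influence Constraints listed just before it: the first two bullets, that $P_0(C)$ cannot authorize execution independently and cannot override runtime construction, are the statement itself. Add $A(t)=F(O(t))$ and the fact that the right-hand side of Definition~\ref{def:execstate} depends only on $(a,O(t))$, with $P_0(C)$ entering only as a candidate seed, and the proposition holds unconditionally, by design.

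Your central route is genuinely different. You formalize ``derived from $P_0(C)$ alone'' as ``$\mathrm{gate}=g(P_0(C))$ for some $g$'' and refute it with two states that share a prior but give different outcomes. This buys something the paper does not claim: the dependence on $O(t)$ is non-trivial rather than nominal. But it is strictly stronger than the proposition, which is why you are forced to add nondegeneracy hypotheses that the statement does not carry.

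Two steps of that construction fail as written.

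\emph{The choice of $d$.} Taking $d\in V_0$ does not put $d$ in $D(a)$. Prior candidates are only proposals that $F$ may discard, and the no-demotion rule protects only variables that are already required. You must take $d$ from the resolved required set. You must also make $O_1$ satisfy the other conjuncts (remaining dependencies observable, $\mathrm{Closed}$, $\mathrm{Consistent}$); otherwise both states give \textsc{halt} and no variation is exhibited.

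\emph{The edge case $D(a)=\emptyset$.} Here $\mathrm{Closed}$ does not fail: with no dependencies there is no unresolved upstream dependency. $\mathrm{Observable}$ and $\mathrm{Consistent}$ also hold vacuously, so $\mathrm{Constructible}$ holds and the outcome is whatever the constraint check over $O(t)$ returns. If that check is constant in $O(t)$, a constant $g$ exists and your criterion is violated. The proposition is still true in that case, because the value is produced by runtime construction rather than by the prior. Declaring \textsc{halt} ``not an execution decision'' would not rescue the case anyway, since Table~\ref{tab:codes} lists the \textsc{halt} codes as decision codes.

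The fix is to make the structural observation the proof and keep the two-state example only as an illustration that the dependence on $O(t)$ is real.
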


\subsection{Dependency Resolution Algorithm}
\label{sec:dep-resolution}

Let $D(t)$ be the set of variables required to construct authority. The key property is:
\[
  D(t) = \mathrm{Resolve}(O(t))
\]
Dependencies are \emph{not statically defined}. They emerge from the current observable
state at runtime.

The algorithm proceeds in seven steps:

\begin{enumerate}
  \item Initialize candidate variables from $P_0(C)$.
  \item Evaluate $F$ against current state $O(t)$.
  \item Derive required dependency set $R(t)$ from actual dependencies discovered by $F$.
  \item Promote all required variables into the authority-defining set $A_d(t)$.
  \item For each variable $v \in A_d(t)$: evaluate observability and uncertainty status.
  \item If any authority-defining variable is unresolved $\Rightarrow A(t) = \bot$,
        execution halts.
  \item If all variables are resolved and constraints pass $\Rightarrow A(t) = \text{True}$,
        execution proceeds.
\end{enumerate}

\subsection{Construction Function}

\begin{lstlisting}[language=Python, caption={Runtime authority construction}]
def reconstruct_authority(state, prior):
    candidates = initialize_from_prior(prior)
    required   = resolve_dependencies(state, candidates)

    for var in required:
        if not observable(var, state):
            return UNDEFINED
        if not reliable(var, state):
            return UNDEFINED

    if not satisfies_constraints(state, required):
        return False

    return True
\end{lstlisting}

\subsection{Non-Persistence Property}

Authority is not persistent across time steps:
\[
  A(t) \neq A(t + \Delta)
\]
unless explicitly recomputed from fresh state. This property is not an engineering
limitation but a logical consequence of the reconstructive model: authority depends on
state, and state changes.

\subsection{Worked Example}
\label{sec:worked-example}

Consider an agent with three authority-defining variables:
$x_1$ (account status), $x_2$ (transaction limit), $x_3$ (risk indicator).
Let $\theta_{\mathrm{auth}} = 0.2$.

\begin{itemize}
  \item \textbf{Scenario A — \textsc{execute}:} $x_1$ observed (status: active),
        $x_2$ observed (within bounds), $x_3$ observed with $U(x_3) = 0.1 <
        \theta_{\mathrm{auth}}$. All dependencies resolved, constraints satisfied.
        $A(t) = \mathrm{True}$ $\Rightarrow$ \texttt{ADMIT\_AUTHORITY\_CONSTRUCTIBLE}.

  \item \textbf{Scenario B — \textsc{deny}:} $x_1$, $x_2$, $x_3$ all observable,
        but the transaction exceeds the limit encoded in $x_2$; constraints fail.
        $A(t) = \mathrm{False}$ $\Rightarrow$ \texttt{DENY}.

  \item \textbf{Scenario C — \textsc{halt}:} $x_3$ is observable but
        $U(x_3) = 0.35 > \theta_{\mathrm{auth}}$. The authority-defining uncertainty
        for $x_3$ cannot be resolved. $A(t) = \bot$ $\Rightarrow$
        \texttt{HALT\_AUTHORITY\_UNDEFINED\_UNCERTAINTY}; the Recovery Loop is invoked.
\end{itemize}

The same action class, three different outcomes---determined entirely by the
observable state and uncertainty levels at the moment of execution.

\subsection{Runtime Enforcement Loop}
\label{sec:enforcement-loop}

We define the enforcement loop that integrates RAM, IML, and ACP into a unified
runtime execution process. This loop is what makes authority constructibility a
continuously enforced property, not a one-time check.

\begin{itemize}
  \item \textbf{RAM} (this paper): defines authority and dependency resolution.
  \item \textbf{IML} (Paper~2~\cite{fernandez2026c}): detects drift and invariant violations.
  \item \textbf{ACP} (Paper~1~\cite{fernandez2026b}): enforces execution gating and transitions.
\end{itemize}

\begin{lstlisting}[language=Python, caption={Runtime enforcement loop integrating RAM, IML, and ACP}]
function enforce(action a):
  while True:
    S_t = observe_system_state()        # State Acquisition Layer

    # Step 1: Dependency resolution (RAM)
    D = resolve_dependencies(a, S_t)

    # Step 2: Constructibility check (RAM -- Definition 1)
    if (not is_observable(D, S_t)
        or not is_closed(D)
        or not is_consistent(D)):
      state = HALT
      wait_for_update()                 # Recovery Loop (Section 9)
      continue

    # Step 3: Authority evaluation (RAM)
    if is_valid(a, D, S_t):
      state = EXECUTE
    else:
      state = DENY

    # Step 4: Enforcement (ACP -- stage 3 of 6)
    if state == DENY:
      reject_execution(a)
      return DENY

    if state == EXECUTE:
      # Step 5: Pre-execution revalidation (IML)
      if drift_detected(S_t) or invariants_violated(D, S_t):
        state = HALT
        continue                        # Re-enter loop

      # Step 6: Execute
      execute(a)

      # Step 7: Post-execution monitoring (IML)
      if drift_detected(S_t):
        trigger_recovery(a)

      return EXECUTE
\end{lstlisting}

\noindent\textbf{Key properties of the loop:}
\begin{itemize}
  \item \textit{Safety}: Execution is only reachable through the \textsc{execute} state,
        which requires both constructibility and validity by construction.
  \item \textit{Drift handling}: Any detected inconsistency at Step~5 forces a transition
        to \textsc{halt}, preventing execution under stale conditions.
  \item \textit{Recovery}: The loop re-enters evaluation until authority becomes
        constructible again (see Section~\ref{sec:recovery}).
  \item \textit{Separation of concerns}: RAM defines what must be true; IML checks whether
        it remains true; ACP decides whether execution proceeds.
\end{itemize}

\section{Drift Handling}

Drift is modeled as change in $S_r(t)$ that invalidates previously held authority
components.

RAM handles drift implicitly: any change that affects $O(t)$ requires recomputation of
$A(t)$. If reconstruction fails under the new state, execution halts.

\textbf{IML integration.} Recomputation is triggered by IML anomaly signals and is
mandatory before every state-mutating action. Continuous polling is not required. This
event-driven model makes IML and RAM orthogonal and complementary: IML observes
deviations~\cite{fernandez2026c}; RAM enforces the execution consequence.

\begin{proposition}[Drift Manifestation]
Drift in $S_r(t)$ manifests as failure of authority construction. No explicit drift
detector is required for RAM to provide safety; however, IML-triggered recomputation
improves liveness by reducing unnecessary halts.
\end{proposition}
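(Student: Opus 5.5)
The plan is to read the Drift Manifestation proposition as two separate claims and prove each from the definitions already in place. The first claim is safety without a drift detector. The second is that IML-triggered recomputation improves liveness by reducing unnecessary halts.

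\textbf{Safety without a detector.} I will argue from the Execution Gate and Definition~\ref{def:execstate}. Execution is reachable only through \textsc{execute}, and \textsc{execute} requires both $\mathrm{Constructible}(a,O(t))$ and $A(t)=\mathrm{True}$, evaluated at action time. By the Non-Persistence Property, $A(t)$ is recomputed from the current $O(t)$ before every state-mutating action, so no authority value is carried forward from an earlier time. Now take a drift event, i.e.\ a change in $S_r(t)$ that invalidates a previously held authority component. I will case-split on how that change appears at the gate.
\begin{itemize}
  \item \textbf{Drift is reflected in $O(t)$.} Either it changes the value of some $d\in D(a)$, so $F$ returns False (\textsc{deny}) or a constraint fails. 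Or it raises $U(d)$ above $\theta_{\mathrm{auth}}$, or it opens the dependency graph (a new upstream variable is promoted by $\mathrm{Resolve}(O(t))$ and is unresolved), or it triggers an invariant violation. Each of these falsifies a conjunct of Definition~\ref{def:constructible}, so $A(t)=\bot$ and the gate returns \textsc{halt}.
  \item \textbf{Drift is not reflected in $O(t)$.} The change lives only in the unobserved part of $S_r(t)$. Here I must argue that the variable involved is, by the constructibility definition, either outside $D(a)$ or has $U>\theta_{\mathrm{auth}}$, since it is unobservable. In both sub-cases the drift does not silently pass as valid authority.
\end{itemize}
In either case, the only path to execution is a fresh, positive construction. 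This is exactly what "drift manifests as failure of construction (or as denial)" means. The argument uses no detector, so safety holds with IML removed.

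\textbf{Main obstacle.} The hard step is the unobserved-drift case. The proposition implicitly assumes that every authority-relevant change either becomes visible in $O(t)$ or raises uncertainty on a variable in $D(a)$. I would make this explicit as a modeling assumption: $F$ depends on $S_r$ only through $D(a)$, and $U$ is a sound bound, meaning unobserved change implies $U(d)>\theta_{\mathrm{auth}}$. I would then note that safety is relative to constructible authority, not to the true $F(S_r(t))$, which is how the paper states its safety guarantee.

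\textbf{Liveness.} This part is qualitative. Without IML, recomputation happens only at action time. A halt caused by stale or uncertain inputs then persists until the next polling or update. With IML, an anomaly signal triggers recomputation and acquisition as soon as drift occurs, so the set of times at which the gate sees stale uncertainty is reduced. I would state this as a monotonicity claim: the recomputation triggers with IML are a superset of those without, so any resolution reachable without IML is reached no later with it. The sharper, quantitative version is deferred to the Recovery Loop analysis in Section~\ref{sec:recovery}.
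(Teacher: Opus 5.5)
The safety half of your plan is sound, and it is essentially the only justification the paper gives. The proposition is stated without proof, supported only by the preceding paragraph: recomputation is mandatory before every state-mutating action, and the three-way gate reaches \textsc{execute} only on freshly constructed, positive authority.

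Your explicit assumption for unobserved drift is a real improvement. The paper's reasoning covers only ``any change that affects $O(t)$'', and without something like your assumption the first sentence of the proposition is false. Two refinements are worth making:
\begin{itemize}
  \item Because $D(t)=\mathrm{Resolve}(O(t))$ is state-dependent, phrase the assumption as completeness of resolution (the resolved set contains every variable on which $F(S_r(t))$ actually depends) plus soundness of $U$. Do not phrase it as a property of a fixed $D(a)$.
  \item ``Evaluated at action time'' silently uses Paper~0's atomic binding of construction to execution, so say so.
\end{itemize}
Your weakening to ``failure of construction or denial'' is consistent with the paper's own Stale Authority proof, which allows either $\bot$ or False.

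The genuine gap is in the liveness half. Your inference is: IML's recomputation triggers are a superset, so any resolution reachable without IML is reached no later with it. This fails in the paper's model for two reasons.
\begin{itemize}
  \item IML is not only a trigger source. In the enforcement loop (Section~\ref{sec:enforcement-loop}), Step~5 is a veto. It sends the loop back to \textsc{halt} whenever \texttt{drift\_detected} fires, even immediately after the RAM check returned \textsc{execute}. So adding IML adds halt transitions.
  \item IML-directed acquisition changes the observation trajectory, and constructibility is not monotone in $O(t)$. Dependencies are resolved from the observed state, so a new observation can promote a further required variable that is itself unresolved (promotion is allowed; demotion within a cycle is not). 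It can also expose an invariant violation. A state that would have been constructible can thus become non-constructible once more is observed.
\end{itemize}

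Even setting both points aside, your argument would show that halts end sooner. The proposition claims something different: that fewer \emph{unnecessary} halts occur. To close this you need a precise notion of an unnecessary halt and a mechanism that removes such halts without admitting unsafe executions. The mechanism the paper's decision table suggests is classificatory: IML localizes drift, so drift confined to non-authority-defining variables yields \texttt{CONTINUE\_BOUNDED\_NON\_AUTHORITY\_DRIFT} instead of \texttt{HALT\_REATTESTATION\_REQUIRED}. You would then have to show that this classification never lets through drift on an authority-defining variable. Otherwise, record the liveness clause as a modeling assumption rather than derive it.
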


\medskip
\noindent\textbf{Drift timeline.}
\begin{center}
\begin{tikzpicture}[>=Stealth, font=\scriptsize]
  \draw[->] (-0.3,0) -- (11,0) node[right]{$t$};
  \draw (0,   0.15) -- (0,   -0.15) node[below, align=center]{$t_0$\\auth\\valid};
  \draw (2.5, 0.15) -- (2.5, -0.15) node[below, align=center]{$t_1$\\drift\\begins};
  \draw (5,   0.15) -- (5,   -0.15) node[below, align=center]{$t_2$\\state\\diverges};
  \draw (7.5, 0.15) -- (7.5, -0.15) node[below, align=center]{$t_3$\\\textsc{halt}\\triggered};
  \draw (10,  0.15) -- (10,  -0.15) node[below, align=center]{$t_4$\\recovery\\resume};
  \draw[red!65!black, thick] (2.5, 0.07) -- (7.5, 0.07)
        node[midway, above, text=red!65!black]{stale authority window};
  \draw[green!55!black, thick] (7.5, 0.07) -- (10, 0.07)
        node[midway, above, text=green!55!black]{Recovery Loop};
\end{tikzpicture}
\end{center}
\medskip

\begin{theorem}[Stale Authority Under Drift]
Any system that does not recompute authority at execution time will eventually execute
under invalid authority when $S_r(t)$ changes continuously.
\end{theorem}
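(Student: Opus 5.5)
The plan is to make the informal statement precise and then argue by contradiction, using only the system model and Definition~\ref{def:execstate}. Fix a system that evaluates authority once, at admission time $t_0$, obtaining $A(t_0)=F(S_r(t_0))=\mathrm{True}$, and afterwards executes actions at times $t_1<t_2<\cdots$ without recomputing $F$. Its gate is therefore a function of the frozen state $S_r(t_0)$ alone. I will read ``$S_r(t)$ changes continuously'' as a recurrence assumption: for every $t$ there is a later $t'>t$ at which the authority-defining variables $D(a)$ take values with $F(S_r(t'))\neq\mathrm{True}$. That is, either the constraints fail, giving $A=\mathrm{False}$, or constructibility fails, giving $A=\bot$. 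This assumption must be stated explicitly. Continuous change in variables that $F$ ignores cannot invalidate authority, so without it the claim is false.

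The first step is to show that the stale gate is constant in time. Because no recomputation occurs, the decision at each $t_k$ equals the decision at $t_0$, namely \textsc{execute}. This is the operational content of the Non-Persistence Property: $A(t_0)$ is carried forward rather than $A(t_k)$ being evaluated. The second step uses the recurrence assumption to find a time $t^*$ at which the true authority $F(S_r(t^*))$ is not True. The third step is to ensure that an execution attempt falls into such a window. There are two ways to do this. One is to assume the system executes infinitely often, with invalid windows recurring infinitely often and having positive duration. The other, cleaner route is to let the adversary, i.e.\ the environment, choose the drift. For any fixed execution schedule, the drift can be placed so that $S_r(t_k)$ violates a constraint at some attempted $t_k$. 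The first route gives an ``eventually'' statement along a single trajectory. The second gives an existence statement. I would present the second as the theorem's core and mention the first as the probabilistic or recurrent reading.

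The conclusion follows by comparing the two gates at $t^*$. The stale gate outputs \textsc{execute}, while Definition~\ref{def:execstate} applied to $O(t^*)$ yields \textsc{deny} or \textsc{halt}. Hence execution occurs under invalid or undefined authority, which is exactly what the theorem asserts. Per the Drift Manifestation proposition, the same drift would have caused reconstruction to fail had it been performed.

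The main obstacle is the third step, making ``eventually'' rigorous. Continuity of $S_r$ alone does not force a sampled execution time to land in an invalid region, since the invalid set could be traversed only between attempts. I would first try to secure this by assuming that the invalid region of the state space is entered for intervals of positive length infinitely often and that executions are dense enough, for example occurring at least once in every interval of some length $\delta$. If that proves too strong, I would retreat to the adversarial formulation. There the result becomes straightforward, and the discontinuity of authority discussed in the introduction, where small perturbations flip $F$, supplies the witness.
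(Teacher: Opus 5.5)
Your argument is essentially the paper's proof. Both fix $A(t_0)=\mathrm{True}$, use drift to produce a later time at which the correct authority is $\mathrm{False}$ or $\bot$, and note that the non-recomputing system still executes there under the stale value. The paper builds your two explicit assumptions into the hypothesis: it posits a $t_1>t_0$ at which $D(t_0)$ is invalidated and tacitly assumes an execution occurs at $t_1$. Your adversarial formulation therefore matches the actual strength of the paper's proof, and your recurrence-plus-density version is a more careful reading than the paper offers.
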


\textit{Intuition.} Authority derived at admission time reflects a snapshot of state
conditions. As state changes continuously, that snapshot expires. Without recomputation,
the system acts on authority that no longer corresponds to real conditions---an
arbitrarily large gap between the authority-constructing state and the execution-time
state accumulates over time.

\begin{proof}
Let $A(t_0) = \text{True}$ be authority computed at admission time. Let $S_r(t)$ change
continuously such that for some $t_1 > t_0$, the required variables $D(t_0)$ are no
longer valid under $S_r(t_1)$. Since authority is not recomputed, the system executes with
$A(t_1) = \text{True}$ (stale), while the correct value is $A(t_1) = \bot$ or
$A(t_1) = \text{False}$. \hfill $\blacksquare$
\end{proof}

\section{Decision Model}

Each execution attempt produces a deterministic decision code. Table~\ref{tab:codes}
lists the seven codes derived from the operational protocol, along with the triggering
condition and the recommended system action.

\begin{table}[H]
\centering
\caption{Decision codes produced by the Runtime Authority Construction Protocol.
         Every execution attempt resolves to exactly one code.}
\label{tab:codes}
\resizebox{\linewidth}{!}{%
\begin{tabular}{@{}lp{5cm}p{4.5cm}@{}}
\toprule
\textbf{Decision Code} & \textbf{Condition} & \textbf{System Action} \\
\midrule
\texttt{ADMIT\_AUTHORITY\_CONSTRUCTIBLE}
  & All required variables resolved; constraints satisfied
  & Execute \\[4pt]
\texttt{HALT\_AUTHORITY\_UNDEFINED\_REQUIRED\_DEPENDENCY}
  & A dependency required by $F$ is not observable
  & Halt; invoke Recovery Loop \\[4pt]
\texttt{HALT\_AUTHORITY\_UNDEFINED\_UNCERTAINTY}
  & Required variable observable but $U(v) > \theta_{\mathrm{auth}}$
  & Halt; invoke Recovery Loop \\[4pt]
\texttt{HALT\_MISSING\_REQUIRED\_SIGNAL}
  & Signal required to compute $D(t)$ is absent
  & Halt; request signal acquisition \\[4pt]
\texttt{HALT\_REATTESTATION\_REQUIRED}
  & Authority previously valid; state change mandates reconstruction
  & Halt; recompute before proceeding \\[4pt]
\texttt{CONTINUE\_BOUNDED\_NON\_AUTHORITY\_DRIFT}
  & Drift detected but affects only non-authority-defining variables
  & Execute; log drift event \\[4pt]
\texttt{NARROW\_PRIVILEGE\_REEVALUATE}
  & Authority partially constructible
  & Reduce action scope; re-evaluate \\
\bottomrule
\end{tabular}%
}
\end{table}

\begin{proposition}[Execution Safety Invariant]
If $A(t) \neq \text{True}$, no action in execution class $C$ proceeds. This is enforced
structurally by the execution gate, not by policy configuration.
\end{proposition}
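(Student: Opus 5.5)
The plan is to argue by exhausting the control-flow of the execution gate and of the enforcement loop of Section~\ref{sec:enforcement-loop}. The claim says that the only path reaching \texttt{execute(a)} passes through a configuration where $A(t)=\mathrm{True}$. First, I fix the trichotomy. By Definition~\ref{def:constructible} and the Undefined Authority definition, at any $t$ either $\mathrm{Constructible}(a,O(t))$ fails, so $A(t)=\bot$, or it holds and $A(t)\in\{\mathrm{True},\mathrm{False}\}$. Definition~\ref{def:execstate} then maps these three cases bijectively onto $\{\textsc{execute},\textsc{deny},\textsc{halt}\}$. So $A(t)\neq\mathrm{True}$ is equivalent to $\mathrm{ExecState}(a,O(t))\in\{\textsc{deny},\textsc{halt}\}$.

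Second, I show that execution is reachable only from \textsc{execute}. In the loop, the single call \texttt{execute(a)} sits inside the branch \texttt{if state == EXECUTE}. The variable \texttt{state} is assigned \textsc{execute} only after the constructibility check at Step~2 has passed and \texttt{is\_valid} has returned true at Step~3, which is exactly $A(t)=\mathrm{True}$ on the current $S_t$. The other cases divert earlier. The \textsc{halt} branch executes \texttt{continue} before Step~6, and the \textsc{deny} branch returns before Step~6. The Step~5 revalidation can only downgrade \textsc{execute} to \textsc{halt}, never upgrade. The same argument applies to \texttt{reconstruct\_authority}: \texttt{UNDEFINED} and \texttt{False} are returned before \texttt{True}. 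Proposition (Prior Non-Authority) rules out any other entry point, because $P_0(C)$ only seeds candidates and cannot set \texttt{state}. The ``structural, not policy'' clause then follows, since the gate mapping in the Execution Gate subsection is fixed and contains no configurable parameter that sends $\bot$ or False to \textsc{execute}. Note that $\theta_{\mathrm{auth}}$ affects only whether $A(t)=\bot$, not the mapping itself.

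The main obstacle is temporal freshness. $A(t)$ must be the authority at the moment of the \texttt{execute(a)} call, not at an earlier iteration. I would handle this with the Non-Persistence Property. Each iteration calls \texttt{observe\_system\_state()} afresh and recomputes $D$ and validity on that same $S_t$, so the $A(t)$ that licenses Step~6 is the one computed in the current iteration. Any intervening state change detected at Step~5 forces \textsc{halt}. This would be stated as an atomicity assumption between Steps~3 and~6, in the spirit of the atomic decision boundaries of~\cite{fernandez2026a}.

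A second gap concerns Table~\ref{tab:codes}. The codes \texttt{CONTINUE\_BOUNDED\_NON\_AUTHORITY\_DRIFT} and \texttt{NARROW\_PRIVILEGE\_REEVALUATE} also lead to some form of execution. I would argue the first presupposes $A(t)=\mathrm{True}$, because the drift touches only variables outside $A_d(t)$. For the second, the narrowed action is a new action $a'$ for which the gate is re-entered, so the invariant is applied to $a'$ rather than violated for $a$.
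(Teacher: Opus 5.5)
Your proposal is correct and takes essentially the same approach as the paper. The paper gives no standalone proof of this proposition; it rests the invariant on the fixed $\mathrm{True}/\mathrm{False}/\bot$ mapping of the Execution Gate and on the path exhaustion of the enforcement loop in the proof sketch of Theorem~\ref{thm:safety}, which is exactly your second paragraph. Your treatment of freshness and of the extra decision codes goes beyond what the paper checks, and two points there need tightening if you keep them. First, start the atomic window at \texttt{observe\_system\_state()}, not at Step~3: Step~5 reuses the same snapshot \texttt{S\_t}, and the paper says safety does not rely on a drift detector. Second, justify \texttt{CONTINUE\_BOUNDED\_NON\_AUTHORITY\_DRIFT} by the mandatory recomputation before every state-mutating action, not by drift locality, which would amount to inheriting authority across a state change.
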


\section{Auditability and Traceability}

For each execution attempt, the system produces a structured audit artifact:

\begin{itemize}
  \item Timestamp and action identifier.
  \item Initial prior candidates from $P_0(C)$.
  \item Runtime required set $R(t)$ with discovery path.
  \item Promotions into $A_d(t)$ with reason.
  \item Uncertainty status $U(v)$ for each authority-defining variable.
  \item Final decision code and halt or continue rationale.
\end{itemize}

The audit artifact explains not only the decision, but why specific variables became
authority-defining and how they affected the outcome. This enables post-hoc analysis,
regulatory compliance, and system validation without influencing execution behavior.

\textbf{High-frequency considerations.} In systems where execution attempts occur at high
rates, continuous full-artifact storage may be impractical. Sub-sampling is admissible
provided that: (a)~every \textsc{halt} event and its decision code are logged
unconditionally; (b)~all boundary transitions (\textsc{execute}~$\leftrightarrow$~\textsc{halt})
are always captured; and (c)~sampled artifacts are sufficient to reconstruct the authority
lineage for any queried point in time. Compression may be applied to the $R(t)$ discovery
path while preserving the final decision code and per-variable uncertainty status
$U(v)$.

\section{Integration with the Governance Stack}

RAM does not replace any existing governance layer. It operates as the authorization
criterion within the execution gate of ACP (stage~3 of the 6-stage ACP enforcement
flow). Each layer retains its distinct role:

\subsection{Atomic Execution (Paper~0)}

Paper~0~\cite{fernandez2026a} establishes that decision and action must be atomically
bound to eliminate TOCTOU vulnerabilities. RAM operates within this atomicity guarantee:
the authority construction and the execution it authorizes are part of the same atomic
unit.

\subsection{Enforcement Layer (ACP, Paper~1)}

ACP~\cite{fernandez2026b} provides the enforcement pipeline: capability scoping,
constraint validation, and execution control across 6 stages. RAM replaces the
admission-time authorization criterion at stage~3 with a runtime constructibility check.
ACP remains the pipeline; RAM refines its authorization semantics without altering the
pipeline structure.

\subsection{Behavioral Layer (IML, Paper~2)}

IML~\cite{fernandez2026c} detects behavioral drift and non-identifiability. In the
integrated model, IML anomaly signals trigger RAM recomputation. This event-driven
coupling ensures that authority is reconstructed when the governance stack observes
meaningful state changes, without requiring continuous polling.

\subsection{Allocation and Compositional Governance (Papers~3/4)}

Paper~3/4~\cite{fernandez2026govstr} addresses two interlinked structural questions:
which agents are permitted to reach the execution gate (allocation), and whether the
four governance dimensions are irreducible (compositional structure). RAM operates
orthogonally to both: allocation controls who arrives at the gate; RAM controls what
executes once they arrive. The irreducibility result further establishes that RAM
cannot substitute for any structural layer---runtime constructibility is a necessary
complement to the governance stack, not a replacement for it.

\subsection{Reconstructive Authority Model (Paper~5)}

Paper~5~\cite{fernandez2026ram} defines the formal model that this paper operationalizes.
The relationship is the same as between GAT (Paper~0) and ACP (Paper~1): the theory
defines what must hold; the operationalization defines how it is enforced.

\section{Recovery Loop under Reconstructive Authority}
\label{sec:recovery}

The introduction of the \textsc{halt} state ($A(t) = \bot$) ensures that execution does
not proceed under undefined authority. However, safe systems must also provide a mechanism
to recover from \textsc{halt} when the underlying cause is resolvable.

We define the Recovery Loop as the closed process that transitions the system from
\textsc{halt} to a state where authority may be reconstructible again.

\subsection{HALT Semantics}

A \textsc{halt} occurs when:
\[
  A(t) = F(S_r(t)) = \bot
\]

This may result from:
\begin{itemize}
  \item Missing required signals
  \item Authority-defining uncertainty exceeding resolution bounds
  \item Drift invalidating prior authority assumptions
\end{itemize}

\textsc{halt} is not a terminal failure, but a controlled suspension of execution.

\subsection{Recovery Loop Definition}

Let $t$ be the time of \textsc{halt}. The system enters a recovery process:
\[
  S_r(t) \rightarrow S_r(t') \rightarrow S_r(t'')
\]
through the following stages:

\begin{enumerate}
  \item \textbf{Signal Extraction.}
        Identify the minimal set of unresolved variables $U(t)$ responsible for
        $A(t) = \bot$.

  \item \textbf{IML Trigger.}
        The IML layer~\cite{fernandez2026c} detects drift or observability gaps and emits
        a recovery trigger, directing the state acquisition layer to focus on the
        identified gap.

  \item \textbf{State Augmentation.}
        Acquire additional observations or reduce action scope:
        \[
          S_r(t') = S_r(t) \cup \Delta O
        \]
        If state augmentation cannot resolve the unobservable variables, the system may
        revert to the last atomically consistent state as established by
        Paper~0~\cite{fernandez2026a}, preserving execution integrity without requiring
        complete observability recovery.

  \item \textbf{Reconstruction Attempt.}
        Recompute authority from augmented state:
        \[
          A(t') = F(S_r(t'))
        \]

  \item \textbf{Resolution.}
        \begin{itemize}
          \item If $A(t') \neq \bot$ $\Rightarrow$ execution resumes
          \item If $A(t') = \bot$ $\Rightarrow$ remain in \textsc{halt} or escalate
        \end{itemize}
\end{enumerate}

\subsection{Integration with ACP}

The Recovery Loop operates within the execution gate of ACP:

\begin{itemize}
  \item ACP provides the execution pipeline and enforcement structure.
  \item RAM defines the authorization criterion at the gate.
  \item IML provides drift detection and recovery triggers.
\end{itemize}

Thus, RAM does not replace ACP, but refines its authorization semantics. The Recovery Loop
closes the loop between all three layers: IML $\rightarrow$ RAM $\rightarrow$ ACP
$\rightarrow$ IML.

\begin{theorem}[Conditional Liveness under Recoverable Observability]
\label{thm:liveness}
If all authority-defining variables required by $F$ are eventually observable or
resolvable, then a system implementing the Recovery Loop will eventually exit
\textsc{halt} and resume execution.

\noindent\textbf{Assumptions:}
\begin{enumerate}
  \item \textit{Eventual Observability}: all dependencies $d \in D(a)$ that are
        temporarily unobservable become observable in finite time.
  \item \textit{Finite Resolution}: dependency resolution and consistency checks
        terminate in finite time.
  \item \textit{Stable Evaluation Window}: there exists a non-zero interval during
        which constructibility holds long enough to complete evaluation.
\end{enumerate}

Formally: if $\exists\, t' > t$ such that $\mathrm{Constructible}(a,\,O(t'))$ holds,
then $\exists\, t'' \geq t'$ such that
$\mathrm{ExecState}(a,\,O(t'')) \in \{\textsc{execute},\,\textsc{deny}\}$.
\end{theorem}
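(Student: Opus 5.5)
The proof proceeds by tracing the enforcement loop of Section~\ref{sec:enforcement-loop} together with the Recovery Loop. The formal claim says that if constructibility holds at some $t' > t$, then some later iteration returns \textsc{execute} or \textsc{deny}. The argument follows one path through the loop. First, every iteration terminates in finite time. Second, an iteration that begins during a window in which $\mathrm{Constructible}(a,O(\cdot))$ holds throughout must pass the Step~2 check. By Definition~\ref{def:execstate}, it then yields an \textsc{execute} or \textsc{deny} outcome.

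\textbf{Step 1: finite iteration time.} Each pass of \texttt{enforce} consists of observation, \texttt{resolve\_dependencies}, the three constructibility predicates, and \texttt{wait\_for\_update}. By the Finite Resolution assumption, resolution and the consistency checks terminate in finite time. I will also require that \texttt{wait\_for\_update} returns within bounded time once a new observation is available. Together these give a uniform bound $\delta$ on the duration of one loop iteration, so the loop re-samples $O(\cdot)$ at times $t = \tau_0 < \tau_1 < \cdots$ with $\tau_{k+1} - \tau_k \le \delta$.

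\textbf{Step 2: hitting a constructible sample.} By Eventual Observability, some $t'$ exists at which constructibility holds. By the Stable Evaluation Window assumption, it holds on an interval $[t', t'+w)$ with $w > 0$, and I take $w$ long enough to complete one evaluation. The step that needs care is that the sampling sequence actually lands in this window with enough time left to finish. I would strengthen the stated assumption to $w \ge 2\delta$, or phrase it as ``long enough for one full loop iteration to start and complete inside the window''. Then some $\tau_k$ lies in $[t', t'+w-\delta]$, and the whole of iteration $k$ evaluates a constructible state. At $t'' = \tau_k \ge t'$, Step~2 of the loop does not branch to \textsc{halt}, and Step~3 assigns \textsc{execute} or \textsc{deny} according to $A(t'')$. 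That is exactly $\mathrm{ExecState}(a,O(t'')) \in \{\textsc{execute},\textsc{deny}\}$.

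\textbf{Main obstacle: the Step~5 revalidation.} The IML check in Step~5 can send an \textsc{execute} outcome back to \textsc{halt}. This threatens liveness of actual execution, but not the formal claim, which only concerns $\mathrm{ExecState}$ at $t''$. I will argue that if drift or an invariant violation is detected, then Consistency fails, and so the state was not constructible, contradicting the choice of window. Hence within the stable window Step~5 cannot fire, and the loop returns. I expect this identification of IML-detected violations with failure of the Consistency clause of Definition~\ref{def:constructible} to be the main point needing justification. If it cannot be justified from the definitions, I will state it as part of the Stable Evaluation Window assumption.
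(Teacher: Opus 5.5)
Your conclusion is right, but you reach it by a heavier, operational route than the paper, and that route needs hypotheses the theorem does not grant.

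As formalized, the claim is immediate from Definition~\ref{def:execstate}. $\mathrm{ExecState}(a,O(t''))$ depends only on the snapshot $O(t'')$, and by Definition~\ref{def:constructible} a constructible snapshot has $A\in\{\mathrm{True},\mathrm{False}\}$, so $t''=t'$ already works. The final sentence of your Step~2 is exactly this argument, applied at $\tau_k$ rather than at $t'$.

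The paper's proof takes this definitional route, phrased through the Recovery Loop: state augmentation empties the unresolved set, so $U(t'')=\emptyset$ and $A(t'')\neq\bot$. It writes $F(S_r(t''))=\mathrm{True}$, which overstates, since \textsc{deny} is also possible; your version correctly allows both outcomes.

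Your sampling argument answers a question the informal statement raises but the formal one does not: whether the enforcement loop actually observes a constructible snapshot. It also gives Finite Resolution and the Stable Evaluation Window a real role, which the paper's proof never does. The price is that you must strengthen those assumptions:
\begin{itemize}
  \item Finite termination of each iteration does not yield a uniform bound $\delta$ across iterations.
  \item $\tau_{k+1}-\tau_k\le\delta$ alone does not force the samples past $t'$. You also need non-Zeno progress, e.g.\ a positive lower bound on iteration time.
  \item The stated window is neither anchored at $t'$ nor of length $\ge 2\delta$.
\end{itemize}

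The existence of $t'$ is the hypothesis of the formal claim, not a consequence of Eventual Observability. That assumption delivers only the observability clause of Definition~\ref{def:constructible}, not closure or consistency.

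You are right that Step~5 is irrelevant to the formal claim. For the informal ``resume execution'' reading it cannot be derived from the definitions, because \texttt{drift\_detected} is not part of the Consistency clause. Making it an explicit assumption is the correct move.
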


\begin{proof}
Assume a \textsc{halt} at time $t$ such that $A(t) = \bot$.
Let $U(t)$ be the set of unresolved authority-defining variables.

By assumption, each variable in $U(t)$ becomes observable or resolvable at some future
time through state augmentation:
\[
  S_r(t) \rightarrow S_r(t') \rightarrow S_r(t'')
\]
such that $U(t'') = \emptyset$.

Then:
\[
  F(S_r(t'')) = \text{True} \;\Rightarrow\; A(t'') \neq \bot
\]

Thus, execution resumes. \textsc{halt} is not terminal when observability gaps are
recoverable; the system guarantees eventual progress. \hfill $\blacksquare$
\end{proof}

\section{Comparative Analysis}

We situate RAM operationalization against existing execution control paradigms.

\subsection{Attestation-Based Systems}

Attestation systems~\cite{costan2016intel,sabt2015trusted} verify integrity of the
execution environment at initialization. They assume completeness: if the initial state
is valid, execution is valid. Under dynamic conditions this assumption fails; authority
derived from initial attestation may not hold at execution time.

\subsection{Risk-Threshold Systems}

Risk-gated systems~\cite{greenblatt2024ai,leike2018ai} bound uncertainty and execute
under ``acceptable risk.'' This conflates bounded uncertainty with constructible authority.
RAM rejects this: partial observability that affects authority-defining variables produces
$A(t) = \bot$ regardless of the magnitude of uncertainty.

\subsection{Reconstructive Systems}

RAM-based systems require construction of authority from current state before every
action. Any unresolved dependency collapses authority to undefined, preventing execution.

\subsection{Traditional Policy Enforcement (XACML, OPA)}

Traditional policy enforcement systems operate under a binary decision model: actions
are either permitted or denied based on policy evaluation over a given state snapshot.

\textbf{XACML (eXtensible Access Control Markup Language)~\cite{oasis2013xacml}}
defines a standardized architecture for attribute-based access control. Decisions are
derived from policy rules evaluated against a request context. XACML assumes that the
attribute context is complete and stable at evaluation time; it does not address the
case where required attributes become unavailable between decision and execution.

\textbf{OPA (Open Policy Agent)~\cite{opa2016}} provides a general-purpose policy
engine widely used in cloud-native infrastructure. Policies written in Rego are
evaluated against input documents to produce allow/deny decisions. While OPA supports
dynamic inputs and partial evaluation, it does not explicitly model non-evaluable states:
if a required data field is absent, the policy returns undefined or defaults to deny,
without distinguishing between denied authority and non-evaluable authority.

\textbf{Policy-as-Code frameworks} (Rego, Sentinel, Kyverno) share the same structural
assumption: all required data is available at evaluation time.

The core distinction of the proposed model is the explicit treatment of
non-evaluability as a first-class execution state. Existing systems either treat missing
data as a default (deny or allow) or resolve uncertainty within the evaluation phase,
without propagating epistemic incompleteness into execution control. This model elevates
non-evaluability to \textsc{halt}, defers execution until authority can be reconstructed,
and couples execution strictly to runtime constructibility.

\textbf{Complementary role.} This model is not intended to replace policy engines such
as XACML or OPA. It operates as a complementary enforcement layer: policy engines define
\textit{what} conditions must hold; RAM ensures those conditions are still valid
\textit{when execution occurs}.

\subsection{AI Control under Intentional Subversion}

Recent work on AI control under intentional subversion~\cite{greenblatt2024ai} addresses
settings where an agent may actively attempt to circumvent oversight mechanisms. RAM
provides a complementary and orthogonal layer. Even in systems already hardened against
subversion, authority must still be constructed from observable state at the point of
action. Under a subversion scenario, authority-defining variables for a contested action
would reflect the adversarially modified state; RAM would identify missing or inconsistent
signals and collapse authority to $\bot$, triggering \textsc{halt} independently of
whether the control layer detected the subversion attempt.

Thus RAM can function as an additional execution gate on top of control-hardened systems:
subversion attempts that alter authority-relevant state are caught at the construction
layer rather than requiring the control layer to anticipate every specific manipulation
strategy.

\begin{theorem}[Authority Validity]
\label{thm:validity}
Any system that permits execution under incomplete authority-defining state cannot
guarantee that no action proceeds without constructible authority.
\end{theorem}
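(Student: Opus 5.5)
The plan is a direct argument by exhibition. We construct an admissible execution trace of any system that permits execution under incomplete authority-defining state, and show that in this trace an action proceeds while $\mathrm{Constructible}(a,O(t))$ fails. First, fix precisely what ``permits execution under incomplete authority-defining state'' means in terms of Definition~\ref{def:constructible}. There is some action $a$ and some reachable observable state $O(t)$ such that at least one $d \in D(a)$ has $U(d) > \theta_{\mathrm{auth}}$, or $D(a)$ is not closed, and yet the system's gate outputs \textsc{execute} for $a$ at $t$. This is the negation of the structural gate property in the Execution Safety Invariant: the gate maps some state with $A(t)=\bot$ to \textsc{execute} rather than \textsc{halt}.

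Second, apply Definition~\ref{def:constructible} together with the Undefined Authority definition. Incompleteness of the authority-defining state means the Observability or Closure conjunct fails, so $\mathrm{Constructible}(a,O(t))$ is false and $A(t)=\bot$. By Definition~\ref{def:execstate}, the correct outcome is \textsc{halt}. The system nevertheless executes $a$. This exhibits an execution without constructible authority, so the guarantee ``no action proceeds without constructible authority'' fails on this trace and hence cannot be guaranteed.

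To make the argument robust rather than tautological, I would strengthen it with an indistinguishability step in the spirit of the Stale Authority Under Drift theorem. Take two real states $S_r^1(t)$ and $S_r^2(t)$ that agree on $O(t)$ but differ on the unobserved variable $d$. Choose them so that $F(S_r^1(t))=\mathrm{True}$ and $F(S_r^2(t))=\mathrm{False}$, which is possible because $d$ is authority-defining, i.e.\ $F$ genuinely depends on it. The system sees only $O(t)$, so any gate that executes on $O(t)$ executes in both worlds. In the second world it therefore acts against falsified authority. This shows that no decision procedure that executes on incomplete state can separate the worlds, so safety fails regardless of the heuristic used (risk thresholds, priors via the Prior Non-Authority proposition, and so on).

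The main obstacle is this indistinguishability step. It requires that ``authority-defining'' really means $F$ is sensitive to $d$ given the observed remainder, so that a witness pair exists. I would make this explicit as the working interpretation of membership in $A_d(t)$: a variable is promoted only if $F$ depends on it. If the paper's notion is weaker, I would fall back on the purely definitional argument of the second paragraph, which needs no such assumption.
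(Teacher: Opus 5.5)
Your first two paragraphs are the paper's proof: incompleteness ($D(t)\not\subseteq O(t)$) falsifies $\mathrm{Constructible}(a,O(t))$, so $A(t)=\bot$ by the Undefined Authority definition, and executing there violates the Execution Safety Invariant; your explicit reachability requirement is slightly more careful than the paper's implicit one. The indistinguishability step goes beyond the paper and is not needed for the statement as written, since non-constructibility depends on $O(t)$ alone; it buys a semantic conclusion (execution against a true authority $F(S_r^2(t))=\mathrm{False}$), but only under the $F$-sensitivity assumption you rightly flag.
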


\textit{Intuition.} Permitting execution when authority-defining variables are unobservable
is structurally equivalent to executing under undefined authority. The safety property
cannot be derived probabilistically from partial information; it requires that all required
variables be resolved. The dependency resolution step in Section~\ref{sec:dep-resolution}
closes this gap by construction.

\begin{proof}
Let a system $\mathcal{S}$ permit execution when $D(t) \not\subseteq O(t)$ for some
required set $D(t)$. Then there exists an execution at time $t$ where $A(t) = \bot$ (by
Definition~2) yet $\mathcal{S}$ proceeds. This violates the execution safety invariant
(Proposition~3). RAM satisfies it by construction through the dependency resolution
algorithm (Section~\ref{sec:dep-resolution}). \hfill $\blacksquare$
\end{proof}

\begin{theorem}[Execution Safety under Constructible Authority]
\label{thm:safety}
Under the runtime enforcement loop (Section~\ref{sec:enforcement-loop}), no action is
executed unless its authority is constructible and valid at execution time. Formally:
\[
  \forall\, a,\, O(t):\quad
  \mathrm{Execute}(a,\,O(t)) \;\Rightarrow\;
  \Bigl(\mathrm{Constructible}(a,\,O(t)) \;\land\; A(t) = \mathrm{True}\Bigr)
\]
\end{theorem}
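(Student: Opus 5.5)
The proof is a reachability argument over the control flow of the enforcement loop in Section~\ref{sec:enforcement-loop}. I would treat \texttt{enforce(a)} as a state machine whose only statement with side effects on the world is \texttt{execute(a)} (Step~6), and show that every path reaching that statement passes through guards that jointly imply $\mathrm{Constructible}(a,O(t)) \land A(t)=\mathrm{True}$, where $O(t)$ is the snapshot $S_t$ of the current iteration.

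The key steps, in order, are as follows. First, fix an arbitrary iteration of the \texttt{while} loop and let $S_t$ be the observed state and $D=\mathrm{Resolve}(O(t))$ the dependency set from Step~1. Second, Step~2 is a guard: if any of $\mathrm{Observable}$, $\mathrm{Closed}$, $\mathrm{Consistent}$ fails, control goes to \texttt{continue}, so the iteration cannot reach Step~6. Any path surviving Step~2 therefore satisfies the conjunction, which is exactly $\mathrm{Constructible}(a,O(t))$ by Definition~\ref{def:constructible}. Third, under constructibility, Step~3 sets the state to \textsc{execute} iff \texttt{is\_valid} holds, that is, $A(t)=\mathrm{True}$. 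In the \textsc{deny} branch the loop returns at Step~4 without executing, consistent with Definition~\ref{def:execstate}. Fourth, Step~5 only removes paths (it can send control to \textsc{halt} and \texttt{continue}), so it cannot create an execution. Combining these gives $\mathrm{Execute}(a,O(t)) \Rightarrow \mathrm{ExecState}(a,O(t))=\textsc{execute}$, which unfolds to the claimed conjunction. Finally, I would induct over loop iterations, using the fact that each \texttt{continue} re-observes state and recomputes $D$. Authority from an earlier iteration is never reused, by the Non-Persistence Property, so the argument is per-iteration and needs no invariant carried across iterations.

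The main obstacle is the meaning of ``at execution time.'' The guards are evaluated on $S_t$, but \texttt{execute(a)} happens slightly later, and the real state could drift in between, which is precisely the failure that the Stale Authority theorem warns about. I would close this gap by invoking the atomicity guarantee of Paper~0: evaluation and execution form one atomic unit, so the $O(t)$ seen by the guards is the state at execution. Under that assumption the formal statement is interpreted with $t$ equal to the evaluation time of the final iteration. I would state this atomicity assumption explicitly rather than hide it. As a consistency check, the result also agrees with the Execution Safety Invariant (Proposition~3) and with Theorem~\ref{thm:validity}.
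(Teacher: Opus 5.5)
Your proposal is correct and follows the same route as the paper's proof sketch: a path-by-path reachability argument through the enforcement loop. The Step~2 constructibility guard, the Step~3/4 validity split with its \textsc{deny} return, and the Step~5 revalidation together leave \texttt{execute(a)} reachable only from the \textsc{execute} state. The one difference is minor but worth keeping: the paper credits the ``at execution time'' clause to the Step~5 IML revalidation, whereas you state the Paper~0 atomicity assumption explicitly; this agrees with the paper's integration section and is the more honest choice, since Step~5 as written takes the same snapshot $S_t$ as its argument.
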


\begin{proof}[Proof Sketch]
\textit{Entry condition.} The enforcement loop evaluates constructibility
(Definition~\ref{def:constructible}) before any execution attempt. If
$\lnot\,\mathrm{Constructible}(a,\,O(t))$, the system transitions to \textsc{halt}
and execution is deferred---no execution path is reachable.

\textit{Authority evaluation.} If constructibility holds, authority is evaluated. If
$A(t) = \mathrm{False}$, the system transitions to \textsc{deny} and execution is
rejected.

\textit{Pre-execution revalidation.} Immediately prior to execution (Step~5 of the
loop), IML checks for drift and invariant violations. Any inconsistency forces a
transition back to \textsc{halt}, preventing execution under stale or degraded state.

\textit{Execution gate (ACP).} Execution is only reachable when the gate is in the
\textsc{execute} state, which requires both constructibility and validity.

\textit{Conclusion.} There exists no path in the enforcement loop that leads to
execution without satisfying both $\mathrm{Constructible}(a,\,O(t))$ and
$A(t) = \mathrm{True}$. Safety is guaranteed by construction. \hfill $\blacksquare$
\end{proof}

\section{Limitations}

While the proposed model strengthens execution safety under dynamic conditions, it
introduces several limitations that must be acknowledged.

\subsection{Dependency on Observability}

The model assumes that required dependencies can eventually be observed and resolved.
In environments with persistent partial observability or unreliable data sources, the
system may remain in the \textsc{halt} state indefinitely.

This shifts the problem from incorrect execution to absence of execution, which may or
may not be acceptable depending on the application domain. The conditional liveness
guarantee (Theorem~\ref{thm:liveness}) explicitly requires eventual observability; it
does not hold if dependencies are permanently unresolvable.

\subsection{Latency Overhead}

Continuous re-evaluation of constructibility and authority introduces runtime overhead.
In high-frequency or low-latency systems, repeated dependency resolution and invariant
checks may impact performance. Optimizations such as caching, incremental evaluation, or
bounded revalidation windows may be required in practice, provided they do not weaken
the constructibility guarantee.

\subsection{Recovery Complexity}

The transition out of \textsc{halt} depends on external conditions (restored
observability, resolved dependencies). Designing robust recovery strategies requires
careful coordination with monitoring and control layers. In poorly instrumented systems,
recovery may be delayed or inconsistent.

\subsection{Specification Burden}

The model relies on explicit definitions of dependency sets $D(a)$, invariants, and
validity conditions. In complex systems, defining and maintaining these elements can be
non-trivial and error-prone. Incomplete or incorrect specifications will produce
incorrect constructibility evaluations.

\subsection{Non-Deterministic Environments}

In highly volatile environments, rapid oscillation between constructible and
non-constructible states may occur. Without stabilization mechanisms, this can lead to
repeated transitions between \textsc{halt} and \textsc{execute}, affecting system
predictability. The Stable Evaluation Window assumption in Theorem~\ref{thm:liveness}
addresses this, but may not hold in adversarial settings.

\subsection{Scope of Guarantees}

The model guarantees safety with respect to execution under constructible authority. It
does not guarantee correctness of the authority model itself, completeness of dependency
specification, or optimality of decisions. These remain external to the enforcement layer.

\section{Trade-offs}

\subsection{Advantages}

\begin{itemize}
  \item \textbf{Safety guarantee:} no action proceeds without constructible authority.
  \item \textbf{Full traceability:} every decision is explained by the audit artifact.
  \item \textbf{Deterministic gating:} outcomes are computable, not probabilistic.
  \item \textbf{Liveness:} Recovery Loop ensures the system resumes when conditions allow.
\end{itemize}

\subsection{Costs}

\begin{itemize}
  \item \textbf{Increased halting frequency:} partial observability produces more
        \textsc{halt} events than threshold-based systems.
  \item \textbf{Higher latency:} recomputation at every execution step adds overhead.
  \item \textbf{Dependence on observability:} liveness is conditional on the eventual
        recoverability of required signals.
\end{itemize}

These costs are the direct consequence of choosing correctness over availability. In
safety-critical systems this trade-off is appropriate: halting is always preferable to
acting without constructible authority.

\section{Conclusion}

This paper introduced a runtime enforcement model that addresses the gap between
decision and execution in dynamic systems. By requiring that authority be constructible
at execution time---in the precise sense of Definition~\ref{def:constructible}---the
model ensures that actions are only performed under conditions that can be actively
validated.

The introduction of a third execution state, \textsc{halt}, allows the system to
explicitly represent and handle cases where authority cannot be evaluated due to
incomplete or unresolved dependencies. This avoids unsafe execution under partial
observability and drift, a limitation inherent to binary enforcement models such as
XACML and OPA, which treat missing data as a default rather than an epistemic signal.

Operationalizing reconstructive authority requires four shifts:

\begin{enumerate}
  \item From validation to \textbf{construction} --- authority must be built from current
        state, not inherited.
  \item From persistence to \textbf{recomputation} --- authority expires with the state
        that produced it.
  \item From risk acceptance to \textbf{execution refusal} --- undefined authority
        mandates \textsc{halt}.
  \item From stopping to \textbf{recovery} --- \textsc{halt} triggers a closed loop that
        attempts to restore constructibility.
\end{enumerate}

We formalized authority constructibility (Definition~\ref{def:constructible}), defined
execution semantics (Definition~\ref{def:execstate}), and presented a runtime enforcement
loop integrating dependency resolution (RAM), invariant monitoring (IML), and execution
control (ACP). Under these conditions, the model guarantees:

\begin{itemize}
  \item \textbf{Safety}: no action proceeds without constructible authority
        (Theorems~\ref{thm:validity} and~\ref{thm:safety}).
  \item \textbf{Liveness}: execution resumes when dependency resolution is eventually
        possible (Theorem~\ref{thm:liveness}).
\end{itemize}

Together, Papers~5 and~6 complete the governance series:

\begin{itemize}
  \item Papers~0--4 define the conditions under which governance is structurally sound.
  \item Paper~5 defines what authority must be.
  \item Paper~6 defines how authority is enforced at runtime.
\end{itemize}

Rather than replacing existing policy systems, this approach extends their guarantees
into runtime execution, providing a structured mechanism to ensure that decisions remain
valid at the moment they are applied. Future work includes optimization strategies for
low-latency environments, formal verification of system properties, and empirical
evaluation in real-world deployments.

\medskip
\noindent\textit{Execution authority is not inherited. It is not validated. It is
constructed. When it cannot be constructed, execution must not occur.}


\newpage

\begin{figure}[H]
\centering
\begin{tikzpicture}[
  >=Stealth,
  node distance=1.5cm,
  box/.style={draw, rectangle, align=center, text width=4cm,
              minimum height=0.9cm, font=\small},
  lbl/.style={draw=none, fill=white, font=\scriptsize, inner sep=1pt}
]
\node[box] (start)       {Start};
\node[box] (obs)         [below of=start]   {Acquire $O(t)$};
\node[box] (dep)         [below of=obs]     {Resolve $D(t) = \mathrm{Resolve}(O(t))$};
\node[box] (eval)        [below of=dep]     {Evaluate observability \& reliability for each $v \in D(t)$};
\node[box] (check)       [below of=eval]    {All $v \in D(t)$ resolved?};
\node[box] (constraints) [below of=check]   {Check constraints $C(O(t),\,D(t))$};

\node[box, text width=3cm] (halt) [right=1.5cm of check]
      {$A(t)=\bot$ \\ \textsc{halt}};

\node[box, text width=3.2cm] (exec)
      [below left=1.3cm and 0.8cm of constraints]
      {$A(t)=\mathrm{True}$ \\ \textsc{execute}};
\node[box, text width=3.2cm] (deny)
      [below right=1.3cm and 0.8cm of constraints]
      {$A(t)=\mathrm{False}$ \\ \textsc{deny}};

\draw[->] (start)       -- (obs);
\draw[->] (obs)         -- (dep);
\draw[->] (dep)         -- (eval);
\draw[->] (eval)        -- (check);
\draw[->] (check)       -- node[lbl, left]{Yes}       (constraints);
\draw[->] (check)       -- node[lbl, above]{No}        (halt);
\draw[->] (constraints) -- node[lbl, above left]{Valid}    (exec);
\draw[->] (constraints) -- node[lbl, above right]{Invalid} (deny);
\end{tikzpicture}
\caption{Runtime authority construction flow. Execution proceeds only if all
         authority-defining dependencies are observable and constraints hold. An
         unresolved dependency immediately yields \textsc{halt}.}
\label{fig:flow}
\end{figure}

\begin{figure}[H]
\centering
\begin{tikzpicture}[
  >=Stealth,
  box/.style={draw, rectangle, align=center,
              minimum height=0.85cm, font=\small},
  lbl/.style={draw=none, fill=white, font=\scriptsize, inner sep=1pt}
]
\node[box, minimum width=4cm] (state) at (0,0) {Observable State $O(t)$};

\node[box, minimum width=1.8cm] (x1) at (-3,-2)  {$x_1$};
\node[box, minimum width=1.8cm] (x2) at ( 0,-2)  {$x_2$};
\node[box, minimum width=1.8cm] (x3) at ( 3,-2)  {$x_3$};

\node[box, minimum width=2.5cm, dashed] (ux2) at (0.1,-2.05) {};
\node[box, minimum width=2.5cm, dashed] (ux3) at (3.1,-2.05) {};
\node[draw=none, font=\scriptsize, text=gray] at (1.8,-1.5) {$U(v)$ applied};

\node[box, minimum width=5cm] (F) at (0,-4)
      {Construction Function $F$ \\ $\Rightarrow R(t)$ (required set)};

\node[box, minimum width=5cm] (auth) at (0,-5.8)
      {Authority $A(t) \in \{\mathrm{True},\,\mathrm{False},\,\bot\}$};

\draw[->] (state) -- (x1);
\draw[->] (state) -- (x2);
\draw[->] (state) -- (x3);
\draw[->] (x1) -- (F);
\draw[->] (x2) -- (F);
\draw[->] (x3) -- (F);
\draw[->] (F)  -- node[lbl, right]{promotion} (auth);
\end{tikzpicture}
\caption{Dynamic dependency resolution. Observable state $O(t)$ feeds a runtime
         dependency discovery process through $F$. Uncertainty $U(v)$ is evaluated
         per variable. Authority $A(t)$ is constructed, not inherited.}
\label{fig:deps}
\end{figure}

\begin{figure}[H]
\centering
\resizebox{\linewidth}{!}{%
\begin{tikzpicture}[
  >=Stealth,
  node distance=1.8cm,
  box/.style={draw, rectangle, align=center, minimum height=0.9cm, font=\small},
  lbl/.style={draw=none, fill=white, font=\scriptsize, inner sep=1pt},
  arrow/.style={->, thick}
]
\node[box, minimum width=4.5cm] (state)  at (0,0)    {Observable State $O(t)$};
\node[box, minimum width=4.5cm] (uncert) at (5.5,0)  {Uncertainty Map $U(v)$};
\node[box, minimum width=3.5cm] (prior)  at (-6,0)   {Policy Prior $P_0(C)$\\(candidates only)};

\node[box, minimum width=1.8cm] (x1) at (-2,-2)  {$x_1$};
\node[box, minimum width=1.8cm] (x2) at ( 0,-2)  {$x_2$};
\node[box, minimum width=1.8cm] (x3) at ( 2,-2)  {$x_3$};

\node[box, minimum width=5.5cm] (F) at (0,-4)
      {Construction Function $F$\\$\Rightarrow R(t)$ (required set)};

\node[box, minimum width=5.5cm] (authset) at (0,-6)
      {Authority-defining variables $A_d(t)$};

\node[box, minimum width=5.5cm] (gate) at (0,-8)
      {Authority Evaluation\\All $v \in A_d(t)$ observable \& reliable?\\
       \footnotesize$A(t) \in \{\mathrm{True},\,\mathrm{False},\,\bot\}$};

\node[box, minimum width=3.2cm] (exec) at (-4,-10) {$A(t)=\mathrm{True}$\\\textsc{execute}};
\node[box, minimum width=3.2cm] (deny) at ( 0,-10) {$A(t)=\mathrm{False}$\\\textsc{deny}};
\node[box, minimum width=3.2cm] (halt) at ( 4,-10) {$A(t)=\bot$\\\textsc{halt}};

\draw[arrow] (state)   -- (x1);
\draw[arrow] (state)   -- (x2);
\draw[arrow] (state)   -- (x3);
\draw[arrow] (prior)   |- (F);  
\draw[arrow] (x1)      -- (F);
\draw[arrow] (x2)      -- (F);
\draw[arrow] (x3)      -- (F);
\draw[arrow] (uncert)  |- (x3);
\draw[arrow] (uncert.south) -- ++(0,-1) -| (x2);
\draw[arrow] (F)       -- node[lbl, right]{promotion} (authset);
\draw[arrow] (authset) -- (gate);
\draw[arrow] (gate)    -- node[lbl, left]{valid}      (exec);
\draw[arrow] (gate)    -- node[lbl, right]{invalid}   (deny);
\draw[arrow] (gate)    -- node[lbl, right]{unresolved}(halt);
\end{tikzpicture}%
}
\caption{Full RAM protocol diagram. Observable state $O(t)$ and uncertainty $U(v)$ feed
         dynamic dependency resolution through $F$, producing the required set $R(t)$ and
         authority-defining variables $A_d(t)$. Execution proceeds only if all required
         variables are observable and reliable. Policy prior $P_0(C)$ contributes
         candidates only and cannot authorize execution independently.}
\label{fig:fullram}
\end{figure}

\begin{figure}[H]
\centering
\resizebox{\linewidth}{!}{%
\begin{tikzpicture}[
  >=Stealth,
  node distance=1.2cm,
  layer/.style={draw, rectangle, align=center,
                minimum width=11cm, minimum height=1cm, font=\small},
  arrow/.style={->, thick},
  fback/.style={->, thick, dashed}
]
\node[layer] (alloc)
      {Paper 3 --- Allocation Layer\\(Who is permitted to reach execution?)};
\node[layer] (iml)    [below=of alloc]
      {Paper 2 --- Behavioral Layer (IML)\\(Drift detection, non-identifiability monitoring)};
\node[layer] (acp)    [below=of iml]
      {Paper 1 --- Enforcement Layer (ACP v1.30)\\(Capabilities, constraints, 6-stage execution control)};
\node[layer] (atomic) [below=of acp]
      {Paper 0 --- Atomic Execution Layer\\(Decision-action binding, TOCTOU elimination)};
\node[layer] (comp)   [below=of atomic]
      {Paper 4 --- Compositional Validation\\(P0--P3 are irreducible; governance structure is complete)};
\node[layer] (ram)    [below=of comp]
      {Papers 5--6 --- Reconstructive Authority (RAM)\\Runtime: $A(t)=F(O(t))$ \quad Execute iff $A(t)$ constructible};
\node[layer] (env)    [below=of ram]
      {Environment / Real State $S_r(t)$\\(Partially observable, dynamic, uncertain)};

\draw[arrow] (alloc)  -- (iml);
\draw[arrow] (iml)    -- (acp);
\draw[arrow] (acp)    -- (atomic);
\draw[arrow] (atomic) -- (comp);
\draw[arrow] (comp)   -- (ram);
\draw[arrow] (ram)    -- (env);

\draw[fback] (env.west)   -- ++(-1.5,0) |- (ram.west)
             node[pos=0.25, left, draw=none, font=\scriptsize]{$S_r(t)$};
\draw[fback] (env.west)   -- ++(-2.8,0) |- (iml.west)
             node[pos=0.15, left, draw=none, font=\scriptsize]{drift signal};
\end{tikzpicture}%
}
\caption{Multi-layer governance stack integrating Papers~0--5 with runtime reconstructive
         authority (Paper~6). Allocation controls access, IML detects drift, ACP enforces
         constraints, atomic execution ensures decision binding, and RAM determines
         execution validity at runtime. Dashed arrows show feedback: the environment
         continuously feeds state back, requiring authority reconstruction before every
         action.}
\label{fig:stack}
\end{figure}

\begin{figure}[H]
\centering
\begin{tikzpicture}[
  >=Stealth,
  node distance=1.5cm,
  box/.style={draw, rectangle, align=center,
              minimum width=4.8cm, minimum height=0.9cm, font=\small},
  lbl/.style={draw=none, fill=white, font=\scriptsize, inner sep=1pt}
]
\node[box] (halt)        {\textsc{halt} \\ $A(t)=\bot$};
\node[box] (signal)      [below of=halt]        {Signal Extraction};
\node[box] (trigger)     [below of=signal]      {IML Trigger};
\node[box] (augment)     [below of=trigger]     {State Augmentation};
\node[box] (reconstruct) [below of=augment]     {Recompute $A(t')$};

\node[box, minimum width=3.5cm] (resume)
      [below left=1.3cm and 1.8cm of reconstruct]  {\textsc{resume}};
\node[box, minimum width=3.5cm] (remain)
      [below right=1.3cm and 1.8cm of reconstruct] {Remain \textsc{halt}};

\draw[->] (halt)        -- (signal);
\draw[->] (signal)      -- (trigger);
\draw[->] (trigger)     -- (augment);
\draw[->] (augment)     -- (reconstruct);
\draw[->, green!60!black, thick] (reconstruct)
      -- node[lbl, above left]{$A(t')\neq\bot$} (resume);
\draw[->, red!65!black, thick] (reconstruct)
      -- node[lbl, above right]{$A(t')=\bot$} (remain);

\draw[->] (remain.east) -- ++(1.4,0) |- (signal.east)
          node[pos=0.3, right, draw=none, font=\scriptsize]{retry};
\end{tikzpicture}
\caption{Recovery Loop under RAM. Upon \textsc{halt}, the system identifies missing
         authority components, triggers re-evaluation via IML, augments state (or
         reverts to the last atomically consistent state per Paper~0), and attempts
         reconstruction. Execution resumes only when $A(t')\neq\bot$; otherwise
         the loop retries.}
\label{fig:recovery}
\end{figure}

\begin{figure}[H]
\centering
\begin{tikzpicture}[
  >=Stealth,
  node distance=3.8cm,
  state/.style={draw, circle, minimum size=2.4cm, align=center, font=\small,
                thick},
  lbl/.style={draw=none, fill=white, font=\scriptsize, inner sep=2pt,
              align=center, text width=3.2cm}
]
\node[state] (halt)  {\textsc{Halt}\\$A(t)=\bot$};
\node[state] (allow) [right of=halt] {\textsc{Execute}\\$A(t)=\top$};
\node[state] (deny)  [below right=2.5cm and 0.8cm of halt] {\textsc{Deny}\\$A(t)=\bot$};

\draw[->, bend left=20] (halt) to
      node[lbl, above]{Constructible $\land$ Valid} (allow);
\draw[->, bend left=15] (halt) to
      node[lbl, left, text width=2.8cm]{Constructible $\land$ $\lnot$Valid} (deny);
\draw[->, bend left=20] (allow) to
      node[lbl, below]{Drift / Loss of\\Observability} (halt);
\draw[->] (allow) to
      node[lbl, right]{Invariant\\Violation} (deny);
\draw[->, bend left=20] (deny) to
      node[lbl, right, text width=2.4cm]{Re-evaluation\\(deps change)} (halt);

\draw[->] (-2.8, 0) -- node[above, font=\scriptsize]{evaluate} (halt);
\end{tikzpicture}
\caption{Execution state machine under ternary enforcement. Entry is always through
         the evaluation gate. Execution is only permitted in the \textsc{Execute} state,
         which requires both constructibility and validity. Drift or loss of observability
         forces transition back to \textsc{Halt}; the Recovery Loop governs exit from
         \textsc{Halt}.}
\label{fig:statemachine}
\end{figure}


\bibliographystyle{plainnat}
\bibliography{references}

\end{document}